\newtheorem{lemma}{Lemma}
\newtheorem{definition}{Definition}
\title{Fast Knowledge Graph Completion using Graphics Processing Units
}
\author{
  Chun-Hee Lee\thanks{Corresponding Author} 
 , Dong-oh Kang, Hwa Jeon Song \\
  Intelligence Information Research Division, ETRI \\
  Daejeon, South Korea \\
  \texttt{\{ch.lee, dongoh, songhj\}@etri.re.kr} \\ 
}
\begin{document}
\maketitle

\begin{abstract}
Knowledge graphs can be used in many areas related to data semantics such as question-answering systems, knowledge based systems. However, the currently constructed knowledge graphs need to be complemented for better knowledge in terms of relations. It is called knowledge graph completion. To add new relations to the existing knowledge graph by using knowledge graph embedding models, we have to evaluate $N\times N \times R$ vector operations, where $N$ is the number of entities and $R$ is the number of relation types. It is very costly. 

In this paper, we provide an efficient knowledge graph completion framework on GPUs to get new relations using knowledge graph embedding vectors. In the proposed framework, we first define \emph{transformable to a metric space} and then provide a method to transform the knowledge graph completion problem into the similarity join problem for a model which is \emph{transformable to a metric space}. After that, to efficiently process the similarity join problem, we derive formulas using the properties of a metric space. Based on the formulas, we develop a fast knowledge graph completion algorithm. Finally, we experimentally show that our framework can efficiently process the knowledge graph completion problem.
\end{abstract}

\keywords{Knowledge Graph Embedding \and Knowedge Graph Completion \and TransE \and Similarity Join \and GPU Processing}

\section{Introduction}
Knowledge graphs can be used in a wide range of areas which require data semantics such as question-answering systems, semantic search systems, and knowledge based systems. A knowledge graph \cite{freebase,yago3,yago4} can be constructed using data sources from an open collaboration platform such as wikipedia or wikidata because an enormous amount of information can be gathered in the open collaboration platform. However, the constructed knowledge graph is still incomplete because there can exist a much larger number of potential relations (i.e., $N \times N \times R$, $N$: the number of entities, $R$: the number of relation types) compared with the number of relations in the existing knowledge graph and data sources from the open platform intrinsically cannot have all the information to connect the relations.

Therefore, we need to add a lot of missing relations (or links) to the knowledge graph. It is called \emph{knowledge graph completion}. Knowledge graph embedding is one of the most commonly used techniques for knowledge graph completion. Much work \cite{kge:transe,kge:transh,kge:transr,kge:stranse,kge:transf,kge:distmult,kge:complex,kge:simple,kge:conve,kge:acre,kge:rotate} has been studied in the literature to improve the accuracy of knowledge graph completion. 
However, most of the knowledge graph embedding studies do not tackle the running time of the knowledge graph completion. To find a meaningful link (i.e., to add a new relation to the knowledge graph), we should compute the score of each triplet $(head, relation, tail)$ and the number of triplets to be computed is very huge (i.e., $N \times N \times R$, $N$: is the number of nodes, $R$ is the number of relation types). For instance, if the number of nodes in the knowledge graph is one million and the number of relation types is one thousand, the number of scores to be evaluated is one quadrillion ($10^{6} \times 10^{6} \times 10^{3}=10^{15}$). Although processing the knowledge graph completion problem is prohibitive, this study has been overlooked in the literature of knowledge graph embedding.

In this paper, we focus on how to efficiently get triplets (i.e., $(head, relation, tail)$) with high scores from the given knowledge graph embedding vectors. Formally speaking, the problem we have to solve is to find a triplet $(\mathbf{h, r, t})$ such that $score(\mathbf{h, r, t}) \ge \epsilon$ for $\mathbf{h, t} \in E$ and $\mathbf{r} \in R$, where $E$ is the set of entity embedding vectors and $R$ is the set of relation embedding vectors. We propose a framework to efficiently solve the knowledge graph completion problem on GPUs because nowadays GPUs are commonly used and their computing capabilities are very powerful if we can organize computing jobs in a parallel way. To provide an efficient and systematic knowledge graph completion framework on GPUs, we first concentrate on \emph{TransE} \cite{kge:transe} which is one of the simple and powerful knowledge graph embedding models. Note that several papers \cite{kge:performance1,kge:performance2,kge:performance3} experimentally show that simple models such as TransE \cite{kge:transe} and DistMult \cite{kge:distmult} have a comparable performance compared to the recent knowledge graph embedding models in terms of accuracy if their hyperparameters are properly tuned. We introduce and define \emph{transformable to a metric space} to generalize our framework. Therefore, our framework can be applied to any knowledge graph embedding model which is \emph{transformable to a metric space}. 
 
The knowledge graph completion problem is similar to the similarity join problem \cite{sim:ekdb,sim:ego,sim:ego2,sim:quickjoin,sim:dboperator} in the database area. However, the approaches for the similarity join problem can not be applied to directly the knowledge graph completion problem because the similarity join problem has two input parameters whereas the knowledge graph completion problem has three input parameters. Also, the settings in the knowledge graph completion problem are different from those in the similarity join problem. In this paper, we provide a method to transform a knowledge graph embedding model, which is \emph{transformable to a metric space}, into the similarity join problem. After that, we derive lemmas using the properties of metric spaces. Based on them, we can check filtering conditions very quickly and screen out pairs without full score computation. We finally propose an algorithm to efficiently process the transformed similarity join problem on GPUs. To the best of our knowledge, our framework is the first to sysmatically solve the knowledge graph completion problem. We hope that our initial findings and results will promote and accelerate this area.

The contributions of our paper are as follows:
\begin{itemize}
\item \textbf{Transformation of Knowledge Graph Completion Problem to Similarity Join Problem:}
For a knowledge graph embedding model which is \emph{transformable to a metric space}, we provide a method to transform the knowledge graph completion problem to the similarity join problem by defining $connector_1$, $connector_2$, and $dist$ functions. 
\item \textbf{Derivation of Lemmas from Properties of Metric Spaces:} 
For efficient and effective filtering, we introduce some pivot and derive two lemmas from the properties of metric spaces. Based on the lemmas, we can evaluate the filtering conditions with the time complexity O($N$) and effectively filter out pairs without full computation. Note that $N$ is the number of entities.
\item \textbf{Algorithm to Efficiently Process Knowledge Graph Completion on GPUs:} Based on the lemmas above, we propose an algorithm to process the knowledge graph completion problem on GPUs. To improve the parallelism, we provide a grouping method. In addition, we propose a block-based algorithm to solve the limitation of GPU memory.
\item \textbf{TransE Model Analysis and Extensive Experiments:} To figure out the characteristics of the knowledge graph completion problem, we analyze TransE models trained from several data sets. Also, we conduct extensive experiments with various data sets such as WN18, WN18RR, FB15K and FB15K-237 which are commonly used in the knowledge graph embedding literature.
\end{itemize}

The rest of the paper is organized as follows. In Section \ref{sec:related}, we review the studies in three areas, \emph{knolwedge graph embedding, similarity join problem on CPUs, and similarity join problem on GPUs}. In Section \ref{sec:problem}, we will explain the problem definition and settings in detail. After that, we will describe the detailed parts of our framework in Section \ref{sec:main}. Finally, we will show experimental results and conclude the paper in Sections \ref{sec:exp} and \ref{sec:con}.

\section{Related Work} \label{sec:related}
In this section, we will describe the related work by classifying it into three parts, \textit{knowledge graph embedding}, \textit{similarity join on CPUs} and \textit{similarity join on GPUs}.
  
\subsection{Knowledge Graph Embedding}
Many approaches for knowledge graph embedding have been proposed in the AI literauture \cite{kge:transe,kge:transh,kge:transr,kge:stranse,kge:transf,kge:distmult,kge:complex,kge:simple,kge:conve,kge:acre}. Typically, there are translation-based approaches \cite{kge:transe,kge:transh,kge:transr,kge:stranse,kge:transf}, bilinear approaches \cite{kge:distmult,kge:complex,kge:simple} and deep neural network based approaches \cite{kge:conve,kge:acre}. The translation-based approaches model relations by translation. TransE \cite{kge:transe}, which is a typical translation approach, trains embedding vectors $\textbf{h},\textbf{r}$, and $\textbf{t}$ for each triplet $(h,r,t)$ such that $\textbf{h} + \textbf{r} \simeq \textbf{t}$. However, since TransE can not represent many-to-one/one-to-many/many-to-many relations, several approaches have been developed to overcome it. To solve the issue of many-to-one/one-to-many/many-to-many relations, TransH \cite{kge:transh} projects \textbf{h, t} to the new hyperplane according to $r$ such that $\mathbf{h}_{\bot}+ \mathbf{d}_r \simeq  \mathbf{t}_{\bot}$, where $\mathbf{h}_{\bot} = \mathbf{h} - \mathbf{w}_r^{\top} \mathbf{h} \mathbf{w}_r$ and $\mathbf{t}_{\bot} = \mathbf{t} - \mathbf{w}_r^{\top} \mathbf{t} \mathbf{w}_r$. TransR  \cite{kge:transr}  considers entities and relations in separate spaces. By defining the mapping matrix $\mathbf{M}_r$, TransR attempts to maximize $-|| \mathbf{h}_r + \mathbf{r} - \mathbf{t}_r||^2_2$ where $\mathbf{h}_r = \mathbf{hM}_r$ and $\mathbf{t}_r = \mathbf{tM}_r$. Also, STransE \cite{kge:stranse} combines Structured Embedding \cite{kge:se} and TransE \cite{kge:transe} to improve the accuracy. The score function of STransE is formulated as $ - ||\mathbf{W}_{r,1} \mathbf{h} + \mathbf{r} - \mathbf{W}_{r,2} \mathbf{t} ||_{L_p} $ where $\mathbf{W}_{r,1}$ and $\mathbf{W}_{r,2}$ are relation-specific matrices according to $r$ and $L_p$ is 1 or 2.

Bilinear approaches have a bilinear function such as $<h,r,t>= \Sigma_{i} h_i r_i t_i$. DistMult \cite{kge:distmult} is the simplest bilinear approach. DistMult attempts to maximize $<h,r,t>$. ComplEx \cite{kge:complex} can be considered as a bilinear approach. Entity embedding vectors and relation embedding vectors in ComplEx are defined on a complex space. The score function of ComplEx  is evaluated by the real part of $<\mathbf{w}_r, \mathbf{e}_h, \bar{\mathbf{e}_t}>$, where $
\mathbf{w}_r, \mathbf{e}_h, \mathbf{e}_t \in \mathbb{C}^d$ are embedding vectors corresponding to $r, h, t$, respectively and $\bar{\mathbf{e}_t}$ is the complex conjugate of $\mathbf{e}_t$. SimplE \cite{kge:simple} is a bilinear approach to consider both relations and inverse relations. Therefore, each entity $e$ has two entity embedding vectors $\mathbf{h}_e, \mathbf{t}_e$ and each relation $r$ has two relation embedding vectors $\mathbf{r}, \mathbf{r}_{inverse}$. Given a triple $(e_1, r, e_2)$, SimpleE attempts to maximize $\frac{1}{2}(<\mathbf{h}_{e_1},\mathbf{r},\mathbf{t}_{e_2}> + <\mathbf{h}_{e_2},\mathbf{r}_{inverse},\mathbf{t}_{e_1}>$).

Deep neural network based approaches use neural network architectures such as CNNs. ConvE \cite{kge:conve} uses convolutional layers and fully-connected layers to train knowledge graph embedding vectors whereas AcrE \cite{kge:acre} builds neural networks with standard convolutions, atrous convolutions and residual networks. Also, AcrE proposes two types of architectures, serial architecture and parallel architecture. In both ConvE and ArcE, a head embedding vector and a relation embedding vector are merged and are reshaped into two dimensional data. The reshaped data are used as input of the neural architecture. 

In addition, to improve the performance of the existing knowledge graph embedding models in terms of  the accuracy (e.g., Hit Rate, Mean Rank, Mean Reciprocal Rank), various methods such as ensemble and  new negative sampling techniques are proposed  \cite{kge:ensemble,kge:gan1,kge:gan2}. Krompa\ss~et al. \cite{kge:ensemble} propose a simple ensemble method with RESCAL, TransE, and neural network based model (called \emph{mwNN} \cite{kge:mwNN}) and experimentally show the ensemble method improves the accuracy of the original knowledge graph embedding models with data sets DBPedia-Music, Freebase-15K and YAGOc-195K. Moreover, to boost the accuracy of the existing knowledge graph embedding models, new negative sampling techniques in knowledge graph embedding are devised \cite{kge:gan1,kge:gan2}. They use GAN (Generative Adversarial Network) \cite{gan} to generate better negative samples.
 
\subsection{Similarity Join on CPUs}

There are plenty of approaches dealing with similarity join problems on CPUs.
For low dimensional data (typically 2 or 3 dimensions), a similarity join problem is strongly related to a spatial join problem. The spatial join problem for spatial object data sets $A$ and $B$ can  be defined as follows \cite{spatial:multistep,spatial:s3j}:
\begin{equation} \label{eq:simdef}
\begin{split}
A  \bowtie_{\theta}   B  =  \{ (a,b) | \theta(a,b) = True   \text{ for } a\in A, b \in B\}
\end{split}
\end{equation}
where $\theta$ is a spatial predicate. For example, we can use \emph{overlap(a,b)} or \emph{near(a,b)} as a spatial predicate. \emph{overlap(a,b)} returns True if spatial objects $a$ and $b$ overlap while \emph{near(a,b)} returns True if $a$ and $b$ are close within threshold $\epsilon$. Partition-based Spatial Merge Join \cite{spatial:partition} stores spatial data to the corresponding partitions. If some spatial object overlaps with multiple partitions, it is replicated and stored in multiple partitions for computing correct answers. However, Size Separation Spatial Join \cite{spatial:s3j}, called S3J, uses level files to avoid spatial object replication. The level files consist of the hierarchical structure from the high resolution file to the low resolution file. Each object is stored on the appropriate level file in order not to overlap with multiple partitions.

The similarity join problem can be formulated in the same way as Equation \eqref{eq:simdef} if we let $\theta(a,b)$ be $dist(a,b) \le \epsilon$ and $A$ and $B$ be $d$-dimensional points. To process the similarity join problem on CPUs, various techchniques have been developed. One of the classical approaches to handle the problem is to use tree-based approaches such as R-tree \cite{spatial:rtree}, $\epsilon$-KDB tree \cite{sim:ekdb}. Basically, after construction of the tree-based indices on the fly, join operations are performed using the constructed indices. For example, $\epsilon$-KDB tree \cite{sim:ekdb} first builds a tree-index with data points  
by splitting the space of a node into $\lfloor \frac{1}{\epsilon} \rfloor$ subspaces. When a node is split at level i, only dimension i is partitioned. After that, the join algorithm is performed using the  $\epsilon$-KDB tree.

As another approach, we can use a grid-based approach \cite{sim:ego,sim:ego2}. In the grid-based approach, the size of each grid is set to $\epsilon$ in order to efficiently filter out unnecessary grids. In \cite{sim:ego}, Epsilon Grid Order (EGO) is devised to process the disk-based similarity join for a large number of high dimensional data. EGO consists of $\epsilon$ sized grids conceptually and each grid is sorted according to the epsilon grid order. The sorted grids are grouped sequentially with respect to the given number of points and stored into IO units. After that, join operations have to be performed on pairs of IO units. Also, in \cite{sim:ego}, an effective scheduling algorithm is proposed to reduce the number of page loadings. Kalashnikov et al. \cite{sim:ego2} propose two grid-based approaches, \emph{Grid-join} and \emph{EGO*-join}. Consider the similarity join with data sets $A$ and $B$. In Grid-join, each point of $B$ is transformed into the circle with $\epsilon$ radius and then a grid index with the circles is constructed. Each cell in the grid index has full and part lists. The full list stores the elements whose circle includes the grid completely whereas the part list stores the elements whose circle intersects with the grid. To process the join problem based on the grid index, Grid-join first finds the corresponding grid to each point of $A$. For all the elements of the full list in the grid, Grid-join appends them to the result buffer without verification. For the elements of the part list in the grid, Grid-join checks if they are a real answer or not and appends the verified elements to the result buffer. Note that the full list concept is applied to only two dimensional case. In addition, Kalashnikov et al. \cite{sim:ego2} propose EGO*-join based on EGO-join \cite{sim:ego}. They devise advanced non-joinable conditions compared to EGO-join.

The previous approaches above assume the distance is the $L_p$ norm and they cannot be applied to other distance functions. Jacox et al. \cite{sim:quickjoin} propose Quickjoin to efficiently process the similarity join problem whose distance function is a metric function. In addition, Silva et al. \cite{sim:dboperator} tackle the similarity join problem on a metric space as a database operator. They propose a similarity join algorithm based on QuickJoin, called \emph{DBSimJoin}, and present the parser, the planner, and the executor for DBSimJoin.

\subsection{Similarity Join on GPUs}

Recently, some approaches attempt to solve the similarity join problem in different environments such as GPUs or distributed systems.  Since a distributed environment such as Hadoop is not our focus, we will discuss the related work on a GPU environment. B\"ohm et al. \cite{sim2:index} show that the index-based similarity join algorithm on GPUs has significant improvement compared to the similarity join algorithm without index on CPUs. They use a very simple index like a directory page and each node of the index has the same number of child nodes. Even though they provide a method to process the similarity join problem on GPUs, they do not consider the high dimensional data. They conduct the experiments for only 8 dimensional data. 

Gowanlock et al. \cite{sim2:grid} handle the large-scale similarity self-join problem using GPUs. Basically, they use a grid-based approach. Given $\epsilon$, they build an $\epsilon$ sized grid for each dimension. A data point $p=(x_1, x_2, \cdots\, x_d)$ is stored in grid $g$ conceptually such that $x_i  \in [~g[i].left,~g[i].right~)$, where $g[i].left$ and  $g[i].right$ are the boundary coordinates of grid $g$ in the i-th dimension. To reduce the space consumption, they store only non-empty cells. For data search in grids, four data structures, $B, G, A$ and $M$ are kept  \cite{sim2:grid}. $B$ is a grid cell lookup and has a pointer to the cell information whereas $G$ is a grid-point mapping and has the point range information. $A$ is a point lookup array and lists all the data points according to the cell ids. $M$ is a masking array to check if the intervals in each dimension have no data point.  $M$ is used to reduce the query region. Because the approach in \cite{sim2:grid} is a grid-based approach, it has the intrinsic drawback of grid-based approaches. That is, the number of grids increases exponentially as the dimensionality gets higher. 

Lieberman et al. \cite{sim2:zorder} also solve the similarity join problem using GPUs. They use a space-filling curve called Z-order because Z-order has a good property as follows:

\begin{equation}
a[i] < b[i] \text{~for all } i \in d \Rightarrow Z(a)<Z(b). \label{eq2}
\end{equation}
 
Using the property above and a set of Z-lists, they filter out unnecessary data points. However, the cost of translating high-dimensional data to the set of Z-lists is high. Note that the time complexity for that is  $O(d^2~\frac{n}{k}~log^2 n)$ \cite{sim2:zorder}, where $n$ is the number of d-dimensional objects and $k$ is the number of threads performed in parallel. If $d$ is high, the term $d^2$ is not a constant term. Also, they do not consider the case of the error threshold $\epsilon > 1$.

Gallet et al. \cite{sim2:load} deal with the issue of the workload imbalance when evaluating the similarity join problem on GPUs. If we assign each thread of a GPU to the job by point p (i.e., each thread has its job of evaluating $|p - q|$ for all $q \in D$), the workload imbalance will occur because in most cases, data are skewed. To remove the workload imbalance, they develop many techniques such as fine-grained job assigning, linear ID mapping, sorting by the amount of workload.

\section{Problem Statement} \label{sec:problem}
In this section, we will describe problem definition and assumptions.

\subsection{Knowledge Graph Completion Problem} \label{sec:problem1}
Knowledge graph completion is to add missing links to the existing knowledge to solve the sparseness of the knowledge graph. To add a new link using a knowledge graph embedding model, we can perform triple classification for all possible cases. That is, if $score(\mathbf{h}_i, \mathbf{r}_j, \mathbf{t}_k) \ge \epsilon$ , the link will be added to the knowledge graph. To use the analogous notation to the similarity join problem, we employ $dist_3(\mathbf{h}_i, \mathbf{r}_j, \mathbf{t}_k) = -score(\mathbf{h}_i, \mathbf{r}_j, \mathbf{t}_k) $ and add the link if $dist_3(\mathbf{h}_i, \mathbf{r}_j, \mathbf{t}_k) \le \epsilon$. The knowledge graph completion problem using knowledge graph embedding vectors is formulated as follows: 

\begin{definition}(Knowledge Graph Completion Problem) \label{def:problem}
Given a knowledge graph embedding model with entity embedding vectors $E$, relation embedding vectors $R$ and function $score$: $E \times R \times E \rightarrow \mathbb{R}$, find all the pairs $(\mathbf{e}_i, \mathbf{r}_j, \mathbf{e}_k)$ such that $dist_3(\mathbf{e}_i, \mathbf{r}_j, \mathbf{e}_k) \le \epsilon$ where $\mathbf{e}_i, \mathbf{e}_k \in E$, $\mathbf{r}_j \in R$ and $dist_3(\mathbf{x, y, z}) = -score(\mathbf{x, y, z})$.
\end{definition}

\begin{figure}
\centering
\includegraphics[width=4in,page=2,trim=0 8in 0 0,clip]{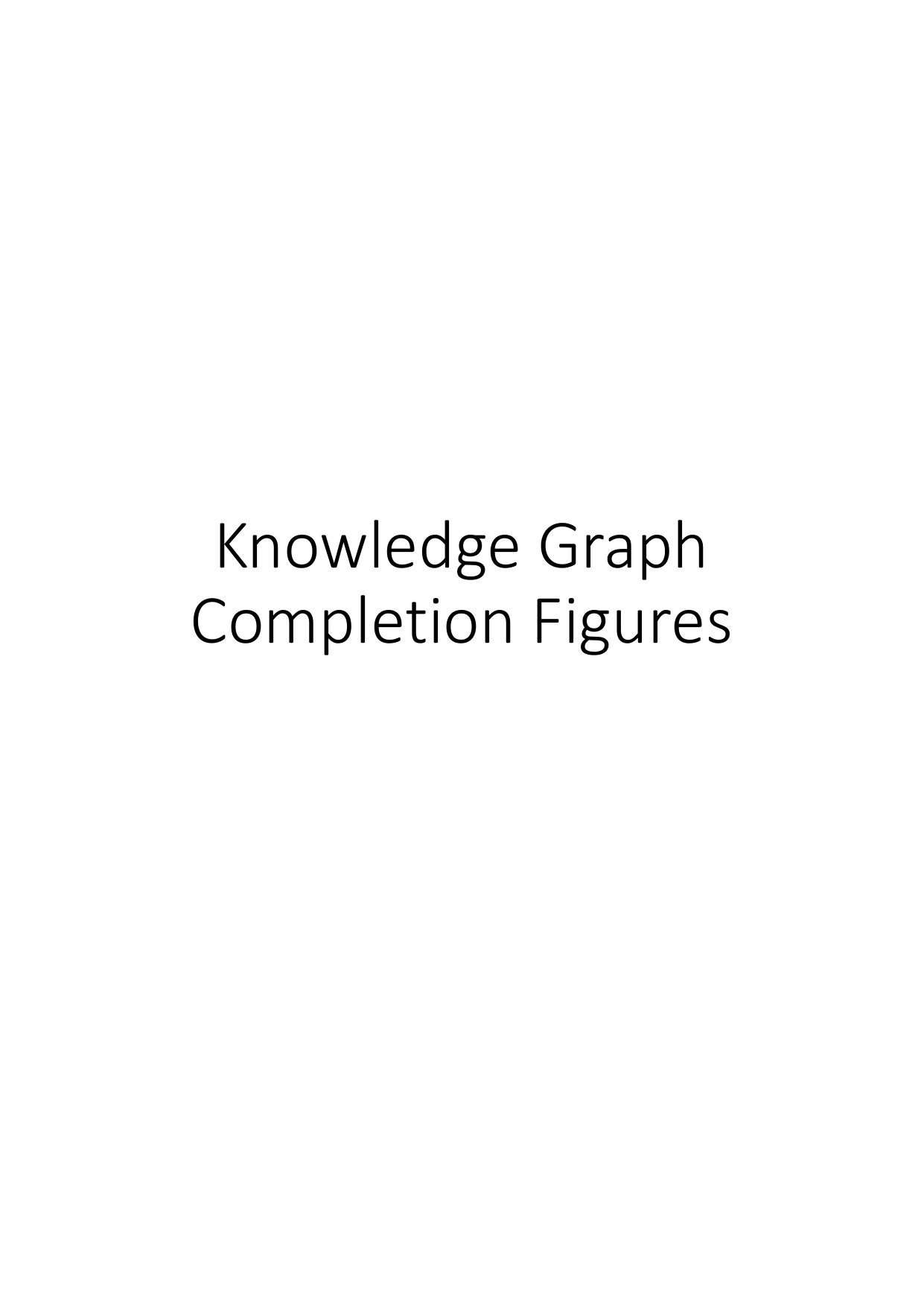}
\caption{Straightforward Algorithm for Knowledge Graph Completion}
\label{fig:naive}
\end{figure}

The knowledge graph completion problem by embedding vectors can be solved with a very simple algorithm as shown in Figure \ref{fig:naive}. For each triplet (i.e., head embedding vector $\mathbf{e}_i$, relation embedding vector $\mathbf{r}_j$ and tail embedding vector $\mathbf{e}_k$), the straightforward algorithm checks $dist_3(\mathbf{e}_i, \mathbf{r}_j, \mathbf{e}_k)$ (See Line 5). If the similarity value is less than or equals to $\epsilon$, the triplet will be appended to the result. However, the cost of executing the straightforward algorithm in Figure \ref{fig:naive} is prohibitive in terms of the running time because many knowledge graphs have more than million nodes. For instance, in the case of one million nodes with 1,000  relation types, we have to evaluate $10^{15}$ similarity functions.

\subsection{Focus of TransE and Its Variants}  \label{sec:problem3}
We aim at evaluating \emph{Knowledge Graph Completion Problem} in Definition \ref{def:problem} as quickly as possible. However, there are many different types of score functions such as TransE \cite{kge:transe}, DistMult \cite{kge:distmult}, ComplEx \cite{kge:complex}. Realistically, it would be not easy to find a common methodology to evaluate scores fast. Originally, we attempted to develop a general framework to cope with most knowledge graph embedding models. However, due to variability and complexity of score functions, we failed to find a common methodology to speed up the time of evaluating the completion problem in any knowledge graph embedding models. After careful consideration, we decided to focus on one typical knowledge graph embedding model and chose TransE \cite{kge:transe} because of the following reasons.

\begin{itemize}
\item \textbf{Good Accuracy:} Although TransE is a simple approach, its performance is comparable with the recent approaches in terms of accuracy. According to the experiments of \cite{kge:performance2,kge:performance3}, TransE has a comparable accuracy with the recent approaches if their parameters are properly tuned.

\item \textbf{Many Variants:} TransE has many variants such as STransE \cite{kge:stranse}, TransF \cite{kge:transf}, TransH \cite{kge:transh}, TransR \cite{kge:transr}. This is  because the concept of TransE is clear and its implementation is easy. 
 
\item \textbf{Simplicity:} Since TransE has a simple structure, it would be very useful as a pilot model for our framework. For example, it is easy to apply various mathematical concepts because of simplicity.
\end{itemize}

Therefore, we concentrate on a technique of speeding up the running time of evaluating the knowledge graph completion problem with TransE embedding vectors. To extend the range of applications, we introduce \emph{transformable to a metric space} and propose a framework to apply to not only TransE but also models which are \emph{transformable to a metric space}.

To the best of our knowledge, our framework is the first to systematically solve the knowledge graph completion problem with embedding vectors in terms of the running time. We hope that our framework will promote this area and a wide range of useful techniques will be developed by integrating the techniques in the DB literature.

\subsection{Assumtions in Knowledge Graph Completion Problem}  \label{sec:problem2}
Although the problem we would like to solve in this paper is formulated in Definition \ref{def:problem}, we assume the following:

\begin{itemize}
\item \textbf{High Dimensional Data:} Knowledge graph embedding vectors usually have hundreds of dimensions. Figure \ref{fig:modelAcc}-(b) shows the characteristics of trained embedding models by using LibKGE \cite{kge:libkge} which is a framework to easily build and evaluate a knowledge graph embedding model. The dimensions of the models are 128, 512, 512, and 128, respectively. \footnote{The detailed description of data sets and trained models will be explained in Section \ref{sec:exp}. } Due to the curse of dimensionality, the dimension size significantly impacts on methodologies. In this paper, we focus on developing techniques for high dimensional data.

\item $|E| >> |R|$:  In knowledge graphs, the number of entities ($|E|$) is much greater than the number of relation types ($|R|$). For example, in the WN18RR data set, the number of relation types is 11 whereas that of entities is 40,943 as shown in Figure \ref{fig:modelAcc}-(a).
\item $\mathbf{\epsilon}$ \textbf{Size:} Many approaches to process the similarity join problem do not work well when $\epsilon$ is big compared to the range of each dimension (especially, $\epsilon > 1$ in space $[0,1]^d$). However, according to our analysis, big $\epsilon$ should be supported in the knowledge graph completion problem. As shown in Figure \ref{fig:top1}, for four pretrained TransE models using LibKGE \cite{kge:libkge} \footnote{We can download them from the site https://github.com/uma-pi1/kge.}, we evaluated the top-1 score (i.e., $min_{i,j,k} (\mathbf{h_i} + \mathbf{r_j} - \mathbf{t_k}$)) and the top-1 score except self-edges (i.e., $min_{i,j,k~\text{and}~(i \neq k)} (\mathbf{h_i} + \mathbf{r_j} - \mathbf{t_k}$)). In FB15K, the top-1 score (2.2475) and the top-1 score except self-edges (4.1478) are greater than the max range size 1.9864. It means that we have to set $\epsilon$ to greater than 1 in space $[0, 1]^d$.
\end{itemize}

\begin{figure}
\centering
\includegraphics[width=4in,page=4,trim=0 5.8in 0 0,clip]{figures}
\caption{Top-1 Values and Ranges}
\label{fig:top1}
\end{figure}

\section{Efficient Knowledge Completion Framework on GPUs} \label{sec:main}
In this section, we will provide the detailed explanation of our framework. We will first describe the overall approach in Section \ref{sec:main1} and then introduce a metric space in Section \ref{sec:main2}, which is necessary to understand our framework. After that, we will explain the details of our framework in Sections \ref{sec:main3}, \ref{sec:main4},  \ref{sec:main5}, \ref{sec:main6} and \ref{sec:main7}.

\subsection{Overall Approach}   \label{sec:main1}

\begin{figure}
\centering
\includegraphics[width=4.5in,page=5,trim=0 8in 0 0,clip]{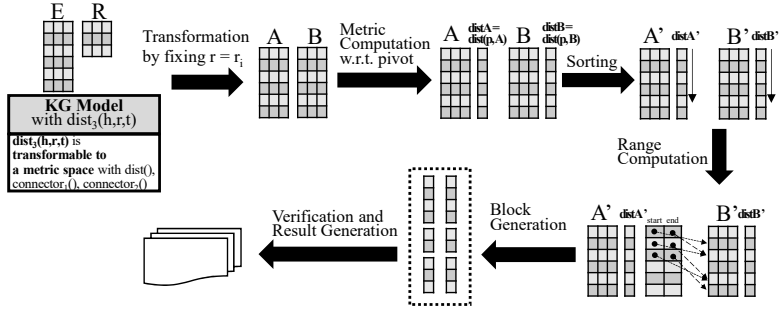}
\caption{Overall Approach}
\label{fig:overall}
\end{figure}

We first describe the overall approach as shown in Figure \ref{fig:overall}. Although we focus on TransE, we attempt to extend our framework in order to be applied to a wide range of knowledge graph embedding models. To do that, we introduce the definition \emph{transformable to a metric space}, which will be explained later. If the similarity function $dist_3$ of knowledge graph embedding models ($dist_3(\mathbf{h, r, t}) = - score(\mathbf{h, r, t})$) is \emph{transformable to a metric space}, there exist $connector_1: \mathbb{R}^d \times \mathbb{R}^d  \rightarrow \mathbb{R}^d$, $connector_2: \mathbb{R}^d \times \mathbb{R}^d  \rightarrow \mathbb{R}^d $ and $dist: \mathbb{R}^d \times \mathbb{R}^d  \rightarrow \mathbb{R} $ functions, where $dist$ is a metric space. 

To apply our framework to a knowledge graph embedding model which is \emph{transformable to a metric space}, we first transform triple data into binary data for each $r=r_i$. For the transformation, $connector_1$ and $connector_2$ functions are used. After that, we get two vector sets, $A, B$. Now we solve the similarity join problem $dist(\mathbf{a,b}) \le \epsilon$ for $\mathbf{a} \in A$ and $\mathbf{b} \in B$ instead of $score(\mathbf{h,r,t}) \ge \epsilon$ for $\mathbf{h,t} \in E$ and $\mathbf{r} \in R$. To efficiently process the similarity join problem, we derive some formulas using the properties of a metric space. In order to apply our formulas, we choose one pivot \textbf{p} and evaluate similarity values between $\textbf{a} \in A$ (or $\textbf{b} \in B$) and pivot \textbf{p}, respectively. They are denoted by \textbf{distA} and \textbf{distB}. We then sort \textbf{distA} and \textbf{distB}, respectively, and rearrange $A$ and $B$ accordingly. Sorted \textbf{distA} and sorted \textbf{distB} are denoted by \textbf{distA'} and \textbf{distB'}, respectively whereas sorted $A$ and sorted $B$ are by $A'$ and $B'$, respectively. The sorting is not heavy since an element of \textbf{distA} and \textbf{distB} is one-dimensional data and the list sizes are $|A|$ and $|B|$, respectively. From the sorted lists (\textbf{distA'} and \textbf{distB'}), we compute the valid range (i.e., \emph{start, end}) for each element \textbf{a} in $A'$ using Lemmas \ref{lemma:lem1} and \ref{lemma:lem2} (This step is processed using Algorithm \ref{fig:algo2}). 

This process seems costly but is performed in $O(max(|A|, |B|))$ because of Lemmas \ref{lemma:lem1} and \ref{lemma:lem2}. Using valid ranges, we create a block which can be processed in parallel on a GPU. Finally, we compute the similarity values (or score values) for all the pairs in the block, verify if the results are valid and return the valid results. We iterate the above process for each $\mathbf{r_i}$ in $R$.

\subsection{Metric Spaces}  \label{sec:main2}
Although we focus on TransE, we extend our framework by introducing a metric space. The metric space is defined as follows: \cite{wiki:metric}

\begin{definition}
Given a set $S$ and a distance function $dist: S \times S \rightarrow \mathbb{R}$, if $dist$ satisfies the following four conditions, we call ($S$, $dist$) a metric space.
\begin{itemize}
\item $dist(a,a) = 0$ for any $a \in S$
\item $dist(a,b) > 0$ for any $a,b (a \neq b) \in S$
\item $dist(a,b) = dist(b,a)$ for any $a,b \in S$ (Symmetry)
\item $dist(a,b) \le dist(a,c) + dist(c,b)$ for any $a,b,c \in S$ (Triangle Inequality)
\end{itemize}
\end{definition}

We derive formulas on a metric space. However, a score function has three input parameters whereas a metric function has only two input parameters. Thus, we transform the knowledge graph completion problem to the similarity join problem.

\subsection{Transformation to Similarity Join Problem} \label{sec:main3}

A score function in the knowledge graph completion problem has three input parameters whereas a similarity function in the similarity join problem has two input parameters. To transform a ternary function into a binary function, we  define
$connector_1: \mathbb{R}^d \times \mathbb{R}^d  \rightarrow \mathbb{R}^d$, $connector_2: \mathbb{R}^d \times \mathbb{R}^d  \rightarrow \mathbb{R}^d $ and $dist: \mathbb{R}^d \times \mathbb{R}^d  \rightarrow \mathbb{R} $ functions such that

\begin{equation}
dist_3(\mathbf{h, r, t}) = dist(connector_1(\mathbf{h, r}), connector_2(\mathbf{t, r}))
\end{equation}
where $dist$ is a metric function. If there exist $dist$, $connector_1$, and $connector_2$ for $dist_3(\mathbf{h,r,t})$,  we say that $dist_3(\mathbf{h, r, t})$ is \emph{transformable to a metric space} with $dist$, $connector_1$, and $connector_2$. After letting $\mathbf{a}=connector_1(\mathbf{h, r})$ and $\mathbf{b}=connector_2 (\mathbf{t, r})$, the knowledge graph completion problem is formulated as follows:

\begin{equation}
\begin{split}
dist_3(\mathbf{h, r, t}) & = dist(connector_1(\mathbf{h, r}),  connector_2(\mathbf{t, r})) \\ 
                        &= dist (\mathbf{a, b}) \le \epsilon
\end{split}
\end{equation}

In TransE \cite{kge:transe}, $dist_3(\mathbf{h, r, t} ) = || \mathbf{h + r - t}||_{L_p}$. If we let $dist(\mathbf{a,b}) = || \mathbf{a - b}||_{L_p}$, $connector_1(\mathbf{a,b})=  \mathbf{a+b}$ and $connector_2(\mathbf{a,b}) = \mathbf{a}$, $dist_3(\mathbf{h, r, t}) = dist( connector_1(\mathbf{h,r}), connector_2(\mathbf{t, r}))$. Thus, TransE is \emph{transformable to a metric space} with $dist$, $connector_1$ and $connector_2$ above. In SE \cite{kge:se}, $dist_3(\mathbf{h, r, t})= || \mathbf{W}_r^{lhs} \mathbf{h} - \mathbf{W}_r^{rhs} \mathbf{t} ||_1$, where $\mathbf{W}_r^{lhs}, \mathbf{W}_r^{rhs}$ are $d \times d$ relation specific matrices. If we let  $dist(\mathbf{a, b})= || \mathbf{a - b}||_1$, $connector_1(\mathbf{a,b}) =  \mathbf{W}_b^{lhs} \mathbf{a}$ and $connector_2(\mathbf{a,b}) = \mathbf{W}_b^{rhs} \mathbf{a}$, SE is also \emph{transformable to a metric space}. Therefore, we can apply our framework to TransE and SE.

Now we handle $dist(\mathbf{a,b})$ instead of  $dist_3(\mathbf{h,r,t})$ for a knowledge graph embedding model which is \emph{transformable to a metric space}. That is, we consider the similarity join problem instead of the knowledge graph completion problem.

\subsection{Lemmas for Our Framework} \label{sec:main4}
To effectively filter out element pairs, we derive the following lemmas using properties of metric functions.  
Based on these lemmas, we will develp a fast completion algorithm on a GPU in Section \ref{sec:main5}.

\begin{lemma} \label{lemma:lem1}
 If $(\mathbb{R}^d, dist)$ is a metric space and some pivot $\mathbf{p}$ is given,
\text{for any} $\mathbf{a, b}$
\begin{equation} \label{eq:lem1}
\begin{split}
dist(\mathbf{a, b})  \ge | dist(\mathbf{p, a}) - dist(\mathbf{p, b}) | 
\end{split}
\end{equation}
\end{lemma}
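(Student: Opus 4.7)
The plan is to recognize this as the standard reverse triangle inequality and derive it directly from the two metric axioms that make the bound work: the triangle inequality and symmetry. Because the right-hand side contains an absolute value, I expect to split into two symmetric cases (or equivalently, prove two inequalities and combine them).

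First I would apply the triangle inequality to the triple $(\mathbf{p}, \mathbf{b}, \mathbf{a})$ to get $dist(\mathbf{p}, \mathbf{a}) \le dist(\mathbf{p}, \mathbf{b}) + dist(\mathbf{b}, \mathbf{a})$. Using symmetry to rewrite $dist(\mathbf{b}, \mathbf{a}) = dist(\mathbf{a}, \mathbf{b})$ and rearranging gives
\begin{equation*}
dist(\mathbf{p}, \mathbf{a}) - dist(\mathbf{p}, \mathbf{b}) \le dist(\mathbf{a}, \mathbf{b}).
\end{equation*}
Next I would do the same with the roles of $\mathbf{a}$ and $\mathbf{b}$ swapped, i.e., apply the triangle inequality to $(\mathbf{p}, \mathbf{a}, \mathbf{b})$, yielding
\begin{equation*}
dist(\mathbf{p}, \mathbf{b}) - dist(\mathbf{p}, \mathbf{a}) \le dist(\mathbf{a}, \mathbf{b}).
\end{equation*}

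Finally, combining the two inequalities shows that $dist(\mathbf{a}, \mathbf{b})$ is an upper bound for both $dist(\mathbf{p}, \mathbf{a}) - dist(\mathbf{p}, \mathbf{b})$ and its negation, which by the definition of absolute value gives $dist(\mathbf{a}, \mathbf{b}) \ge |dist(\mathbf{p}, \mathbf{a}) - dist(\mathbf{p}, \mathbf{b})|$ as required. There is no real obstacle here since the result follows immediately from the axioms; the only subtlety worth being explicit about is that symmetry is used to swap endpoints inside $dist$ so that the triangle inequality can be invoked in the desired orientation for both halves of the absolute-value bound.
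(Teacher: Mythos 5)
Your proposal is correct and follows essentially the same route as the paper's proof: two applications of the triangle inequality (one for each orientation of $\mathbf{a}$ and $\mathbf{b}$), symmetry to reorder the arguments, and then combining the two one-sided bounds into the absolute-value statement. No substantive difference from the paper's argument.
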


\begin{proof}
To prove $dist (\mathbf{a, b})  \ge | dist(\mathbf{p, a}) - dist(\mathbf{p, b}) |$, we use two properties of a metric space, Symmetry and Triangle Inqueality. To avoid confusion, we use $\mathbf{x,y,z}$ instead of $\mathbf{a,b,c}$. Then, for any $\mathbf{x,y,z}$,
\begin{equation}
\begin{split}
dist (\mathbf{x, y})  \le dist(\mathbf{x, z}) + dist(\mathbf{z, y})&  \\
\text{(Triangle Inequality)}&
\end{split}
\end{equation}
If we let $\mathbf{x=a}$, $\mathbf{y=p}$, and $\mathbf{z=b}$, 
\begin{equation} \label{eq:triangle1}
\begin{split}
dist (\mathbf{a, p})  &\le dist(\mathbf{a, b}) + dist(\mathbf{b, p}) \\
\Rightarrow dist (\mathbf{a, b})  &\ge dist(\mathbf{a, p}) - dist(\mathbf{b, p})  \\
\Rightarrow dist (\mathbf{a, b})  &\ge dist(\mathbf{p, a}) - dist(\mathbf{p, b})  ~~~~(Symmetry)\\
\end{split}
\end{equation}
If we let $\mathbf{x=b}$, $\mathbf{y=p}$, and $\mathbf{z=a}$, 
\begin{equation}  \label{eq:triangle2}
\begin{split}
dist (\mathbf{b, p})  &\le dist(\mathbf{b, a}) + dist(\mathbf{a, p}) \\
\Rightarrow dist (\mathbf{b, a})  &\ge dist(\mathbf{b, p}) - dist(\mathbf{a, p})  \\
\Rightarrow dist (\mathbf{a, b})  &\ge dist(\mathbf{p,b}) - dist(\mathbf{p,a})  ~~~~(Symmetry)\\
\Rightarrow dist (\mathbf{a, b})  &\ge - (dist(\mathbf{p,a}) - dist(\mathbf{p,b}))  \\
\end{split}
\end{equation}
From inequalities  \eqref{eq:triangle1} and  \eqref{eq:triangle2}, the following formula holds for any $\mathbf{a,b,c}$
\begin{equation}
\begin{split} 
dist (\mathbf{a, b})  \ge  max( &dist(\mathbf{p,a} )- dist(\mathbf{b, p}),  \\
&-(dist(\mathbf{p,a}) - dist(\mathbf{p,b})) ) \\
\end{split}
\end{equation}
Therefore, 
\begin{equation}
\begin{split} 
dist (\mathbf{a, b})  \ge  | dist(\mathbf{p,a})- dist(\mathbf{b, p})|
\end{split}
\end{equation}
\end{proof}
\hfill$$ \square $$

Actually, Lemma \ref{lemma:lem1} and similar inequalities derived from the properties of a metric space have been used commonly. For instance, based on the same inequality as \eqref{eq:lem1}, Chen et al. \cite{sim:pivot} propose methods to process similarity search and similarity join with prebuilt indices. To process similarity search and similarity join efficiently, they choose $k$ pivot sets and transform a data object $o$ to a k-dimensional vector space ($\phi(o)=<dist(o, p_1), dist(o, p_2), \cdots, dist(o, p_k)>$). After that, they process similarity search and similarity join in the transformed vector space. This approach is valid because the $L_{\infty}$ distance in the transformed vector space is the lower bound of the distance in the metric. Note that $dist(o_i, o_j) \ge ||\phi(o_i) - \phi(o_j)||_{\infty}= max\{ |d(o_i, p_t) - d(o_j, p_t)| | p_t \in P  \}$ from $d(o_i, o_j) \ge |d(o_i, p) - d(o_j,p)|$, where $P$ is a set of pivots \cite{sim:pivot}. However, we use Lemma \ref{lemma:lem1} with Lemma \ref{lemma:lem2} to get the range of candidate elemets to be evaluated from the sorted list according to the pivot. In addition, Chen et al. focus on similarity search and similarity join with prebuilt indices. That is, they attempt to reduce not the total time including index construction but the query time. Moreover, different from our definition, the similarity join problem in \cite{sim:pivot} is formulated as $SimJoin(Q,O) = \{(q,o) | dist(q,o) \le \epsilon
~for~q~ \in Q, o \in O  \}$, where $Q$ is a query set and $O$ is objects.

Let us analyze the inequality in Lemma \ref{lemma:lem1} carefully. For easy understanding, assume that we have $A=\{\mathbf{a_1, a_2, a_3}\}$ and $B=\{\mathbf{b_1, b_2, b_3}\}$ and we would like to check if $dist (\mathbf{a, b}) > \epsilon$ without direct computation for $\mathbf{a} \in A$ and $\mathbf{b} \in B$ using Lemma \ref{lemma:lem1}.

\begin{equation}
\begin{split}
dist (\mathbf{a_1, b_1})  &\ge | dist(\mathbf{p, a_1}) - dist(\mathbf{p, b_1}) | \\
dist (\mathbf{a_1, b_2})  &\ge | dist(\mathbf{p, a_1}) - dist(\mathbf{p, b_2}) | \\
dist (\mathbf{a_1, b_3})  &\ge | dist(\mathbf{p, a_1}) - dist(\mathbf{p, b_3}) | \\
dist (\mathbf{a_2, b_1})  &\ge | dist(\mathbf{p, a_2}) - dist(\mathbf{p, b_1}) | \\
dist (\mathbf{a_2, b_2})  &\ge | dist(\mathbf{p, a_2}) - dist(\mathbf{p, b_2}) | \\
dist (\mathbf{a_2, b_3})  &\ge | dist(\mathbf{p, a_2}) - dist(\mathbf{p, b_3}) | \\
dist (\mathbf{a_3, b_1})  &\ge | dist(\mathbf{p, a_3}) - dist(\mathbf{p, b_1}) | \\
dist (\mathbf{a_3, b_2})  &\ge | dist(\mathbf{p, a_3}) - dist(\mathbf{p, b_2}) | \\
dist (\mathbf{a_3, b_3})  &\ge | dist(\mathbf{p, a_3}) - dist(\mathbf{p, b_3}) | \\
\end{split}
\end{equation}

We have 9 unique $dist(\mathbf{a_i,b_j})$ expressions in the left-hand side. That is, we have $|A| \times |B|$ $dist(\mathbf{a_i,b_j})$ unique expressions. However, in the right hand side, we have just 6 unique expressions, $dist(\mathbf{p, a_i})$ and $dist(\mathbf{p, b_j})$. In other words, we have $|A| + |B|$ unique expressions in the right hand side. It means that we can check  $|A| \times |B|$ inequalities (i.e., $dist (\mathbf{a_i, b_j}) >  \epsilon$) with $|A|+|B|$ values.

We now consider the sorted lists for $dist(\mathbf{p,a_i})$ and $dist(\mathbf{p,b_j})$, respectively. Given  $A=\{\mathbf{a_1, a_2, \cdots, a_m} \}$ and $B=\{\mathbf{b_1, b_2, \cdots, b_n}\}$, we let $distA = [ dist(\mathbf{p, a_1}), dist(\mathbf{p, a_2}), \cdots dist(\mathbf{p, a_m}) ]$ and $distB = [ dist(\mathbf{p, b_1}), $ $dist
(\mathbf{p, b_2}), \cdots dist(\mathbf{p, b_n}) ]$.  Also, we denote two sorted lists of $distA$ and $distB$ by 
 $distA' = [ dist(\mathbf{p, a_{\pi _A(1)}}), dist(\mathbf{p, a_{\pi _A(2)}}), \cdots dist(\mathbf{p, a_{\pi _A(m)}}) ]$ and  
 $distB' = [ dist(\mathbf{p, b_{\pi _B(1)}}), dist(\mathbf{p, b_{\pi _B(2)}}), \cdots dist(\mathbf{p, b_{\pi _B(n)}}) ]$, respectively. Then, we can derive the following lemma.

\begin{lemma} \label{lemma:lem2}
If we define $s_i$ and $e_i$ as follows:

\begin{equation} \label{eq:condition}
\begin{split}
s_i&=min_{  k \in   \{ j  ~| ~  | dist(\mathbf{p, a_{\pi_A(i)}}) - dist(\mathbf{p, b_{\pi_B(j)}})| \le \epsilon  \} }    k \\
e_i&=max_{k \in   \{  j~|~ | dist(\mathbf{p, a_{\pi_A(i)}}) - dist(\mathbf{p, b_{\pi_B(j)}}) | \le \epsilon  \} }  k \\
\end{split}
\end{equation}

then, for any $i \in \{1,2, \cdots, m\}$ 
\begin{equation} \label{eq:se}
\begin{split}
| dist(\mathbf{p, a_{\pi_A(i)}}) - dist(\mathbf{p, b_{\pi_B(k)}})| \le \epsilon  & \text{~if~ }  k \in [s_i,e_i] \\
| dist(\mathbf{p, a_{\pi_A(i)}}) - dist(\mathbf{p, b_{\pi_B(k)}})| > \epsilon & \text{~otherwise~ }  \\
\end{split} 
\end{equation}

Also, 
\begin{equation}  \label{eq:se2}
s_i \le s_{i+1} ~and ~e_i \le e_{i+1}
\end{equation}
\end{lemma}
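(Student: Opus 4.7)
The proof rests on two monotonicity facts: (i) for a fixed $i$, the sequence $\beta_k := dist(\mathbf{p, b_{\pi_B(k)}})$ is non-decreasing in $k$ because $distB'$ is sorted; and (ii) the sequence $\alpha_i := dist(\mathbf{p, a_{\pi_A(i)}})$ is non-decreasing in $i$ because $distA'$ is sorted. With these in hand, both claims reduce to elementary observations about how a one-dimensional $\epsilon$-window interacts with a sorted list, so no further inequality from the metric axioms is needed beyond what defines $\alpha_i$ and $\beta_k$.

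\textbf{Step 1 (equation \eqref{eq:se}).} Fix $i$, and rewrite the condition $|dist(\mathbf{p, a_{\pi_A(i)}}) - dist(\mathbf{p, b_{\pi_B(k)}})| \le \epsilon$ as the two-sided inequality $\alpha_i - \epsilon \le \beta_k \le \alpha_i + \epsilon$. Since $\beta_1 \le \beta_2 \le \cdots \le \beta_n$, the set of indices $k$ satisfying $\beta_k \ge \alpha_i - \epsilon$ is an up-set of the form $[s_i, n]$, and the set satisfying $\beta_k \le \alpha_i + \epsilon$ is a down-set of the form $[1, e_i]$. Their intersection is therefore the contiguous block $[s_i, e_i]$, which matches the definitions of $s_i$ and $e_i$ as the minimum and maximum indices in the valid set. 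This establishes the dichotomy in \eqref{eq:se}.

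\textbf{Step 2 (equation \eqref{eq:se2}).} Now let $i$ increase to $i+1$. Because $distA'$ is sorted, $\alpha_{i+1} \ge \alpha_i$, so the window $[\alpha_{i+1} - \epsilon, \alpha_{i+1} + \epsilon]$ is a right-shift (or identical copy) of $[\alpha_i - \epsilon, \alpha_i + \epsilon]$. Concretely, $s_i$ is the smallest $k$ with $\beta_k \ge \alpha_i - \epsilon$; since $\alpha_{i+1} - \epsilon \ge \alpha_i - \epsilon$, every $k$ that qualifies for $s_{i+1}$ already qualifies for $s_i$, hence $s_{i+1} \ge s_i$. Symmetrically, $e_i$ is the largest $k$ with $\beta_k \le \alpha_i + \epsilon$; the bound $\alpha_{i+1} + \epsilon \ge \alpha_i + \epsilon$ only admits more indices on the right, so $e_{i+1} \ge e_i$.

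\textbf{Anticipated obstacle.} The arguments above are essentially bookkeeping on monotone sequences, so the only real issue is edge cases: for some $i$ the set $\{k : |\alpha_i - \beta_k| \le \epsilon\}$ may be empty, in which case $s_i$ and $e_i$ are not defined by the $\min$/$\max$ formulas. I would handle this by using the convention $s_i = +\infty$ and $e_i = -\infty$ when the set is empty, so that $[s_i, e_i]$ is read as the empty interval in \eqref{eq:se} and the inequalities in \eqref{eq:se2} remain consistent across empty-to-nonempty transitions. With that convention in place, both parts of the lemma follow from the two monotonicity facts without further subtlety.
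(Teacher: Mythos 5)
Your proof is correct and follows essentially the same route as the paper's: rewrite the condition as the two-sided window $c-\epsilon \le dist(\mathbf{p},\mathbf{b}_{\pi_B(j)}) \le c+\epsilon$, use the sortedness of $distB'$ to get a contiguous index range, and use the sortedness of $distA'$ to shift the window rightward for the monotonicity of $s_i$ and $e_i$. If anything you are slightly more explicit than the paper, which argues the contiguity of $[s_i,e_i]$ by pointing at a figure and defers the empty-window edge case to the algorithm description rather than the lemma itself.
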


\begin{figure}
\centering
\includegraphics[width=5.5in,page=6,trim=0 6.7in 0 0,clip]{figures}
\caption{Proof}
\label{fig:proof}
\end{figure}

\begin{proof}
See  the inequality $| dist(\mathbf{p}, \mathbf{a}_{\pi_A(i)}) - dist(\mathbf{p}, \mathbf{b}_{\pi_B(j)}) | \le \epsilon$ in the $min$ or $max$ expression. Since the $min$ expression (or $max$) is tied to only $j$, we can consider $dist(\mathbf{p}, \mathbf{a}_{\pi_A(i)})$ as a constant. If we let $c$ be  $dist(\mathbf{p}, \mathbf{a}_{\pi_A(i)})$,  

\begin{equation}
\begin{split} 
& | c - dist(\mathbf{p}, \mathbf{b}_{\pi_B(j)}) | \le \epsilon \\
\Rightarrow & c - \epsilon\le dist(\mathbf{p}, \mathbf{b}_{\pi_B(j)}) \le  c +  \epsilon \\
\end{split}
\end{equation}

The inequality above can be represented on the number line in Figure \ref{fig:proof}. $s_i$ is the first point in the interval by  $| c - dist(\mathbf{p, b_{\pi_B(j)}}) | \le \epsilon$ and $e_i$ is the last point in the interval. Thus, it is trivial that 
\begin{equation}
\begin{split}
| dist(\mathbf{p}, \mathbf{a}_{\pi_A(i)}) - dist(\mathbf{p}, \mathbf{b}_{\pi_B(k)})| \le \epsilon  & \text{~if~ }  k \in [s_i,e_i] \\
| dist(\mathbf{p}, \mathbf{a}_{\pi_A(i)}) - dist(\mathbf{p}, \mathbf{b}_{\pi_B(k)})| > \epsilon & \text{~otherwise~ }  \\
\end{split} 
\end{equation}
(See the red interval of Figure \ref{fig:proof})

Next, observe the equations \eqref{eq:condition} when i increases by 1 (i.e., $s_{i+1}$ and $e_{i+1}$).
See $| dist(\mathbf{p}, \mathbf{a}_{\pi_A(i+1)}) - dist(\mathbf{p}, \mathbf{b}_{\pi_B(j)})| \le \epsilon$ in the $min$ or $max$ expression. If we let $ c' = dist(\mathbf{p}, \mathbf{a}_{\pi_A(i+1)})$,  

\begin{equation}
\begin{split} 
& | c' - dist(\mathbf{p}, \mathbf{b}_{\pi_B(j)}) | \le \epsilon \\
\Rightarrow & c' - \epsilon\le dist(\mathbf{p}, \mathbf{b}_{\pi_B(j)}) \le  c' +  \epsilon \\
\end{split}
\end{equation}

Since $c \le c'$, it is also trivial that $c-\epsilon \le c' - \epsilon$ and $c+\epsilon \le c' + \epsilon$. Therefore, $s_i \le s_{i+1}$ and $e_i \le e_{i+1}$ (Compare the red line and the blue line in Figure \ref{fig:proof})
\end{proof}

According to Mathematical Expression \eqref{eq:se} in Lemma \ref{lemma:lem2}, given $\mathbf{a}_{\pi_A(i)}$, if $k \not\in [s_i, e_i]$, 
$| dist(\mathbf{p}, \mathbf{a}_{\pi_A(i)}) - dist(\mathbf{p}, \mathbf{b}_{\pi_B(k)})| > \epsilon$. According to Lemma \ref{lemma:lem1},  $dist (\mathbf{a}_{\pi_A(i)}, \mathbf{b}_{\pi_B(k)}) \ge | dist(\mathbf{p}, \mathbf{a}_{\pi_A(i)}) - dist(\mathbf{p}, \mathbf{b}_{\pi_B(k)})| > \epsilon$. Thus,  $dist (\mathbf{a}_{\pi_A(i)}, \mathbf{b}_{\pi_B(k)})  > \epsilon$. We do not need to evaluate $dist (\mathbf{a}_{\pi_A(i)}, \mathbf{b}_{\pi_B(k)})$ for  $k \not\in [s_i, e_i]$. Therefore, for $\mathbf{a}_{\pi_A(i)}$, we check similarity values for only $(e_i - s_i + 1)$ elements (i.e., $\mathbf{b}_{\pi_B(s_i)}$, 
$\mathbf{b}_{\pi_B(s_{i+1})}$, $\cdots$, $\mathbf{b}_{\pi_B(e_{i-1})}$ and $\mathbf{b}_{\pi_B(e_i)}$). 

However, we have to find $s_{i}$ and $e_{i}$ in the list $\mathbf{distB'}$. If we use the binary search, we can find two numbers (i.e., $s_{i}$ and $e_{i}$) with the time complexity O(log n) for each  $\mathbf{a}_{\pi_A(i)}$. However, we have to repeat that process $m$ times for $1, 2, \cdots, m$. That is, the total time complexity is O(m log n). Using Mathematical Expression \eqref{eq:se2} in Lemma \ref{lemma:lem2}, we can reduce the time complexity of evaluating $s_i$ and $e_i$ for $i \in \{1,2, \cdots, m\}$ to O(max(m, n)). Because $s_i \le s_{i+1}$ and  $e_i \le e_{i+1}$, we can find all $s_i$ and $e_i$ in the linear time by checking two lists, $dist(\mathbf{p}, \mathbf{a}_{\pi_A(i)})$ and $dist(\mathbf{p}, \mathbf{b}_{\pi_B(j)})$, sequentially. The algorithm to find $s_i$ and $e_i$ will be shown in Figure \ref{fig:algo2}.

\subsection{Fast Completion Algorithm}  \label{sec:main5}

Figure \ref{fig:algo1} shows the algorithm to efficiently evaluate the knowledge graph completion problem on a GPU. If the entity embedding vectors $E$ and the relation embedding vectors $R$ trained from the knowledge graph embedding model which is \emph{transformable to a metric space} are given, the algorithm returns triplet $(\mathbf{h, r, t})$ such that $dist_3(\mathbf{h,r,t}) \le \epsilon$. We assume that $connector_1$, $connector_2$ and $dist$ are given. We iterate the process in Lines 3-17 with respect to $\mathbf{r}$, relation embedding vector (Line 2). 

To transform the completion problem into the similarity join problem, we first get $A$ and $B$ using $Connector_1$ and  $Connector_2$ functions (Lines 3-4). $Connector_i(E, r)$ evaluates $connector_i(e,r)$ for each $e$ in $E$. $Connector_i()$ is processed using a GPU. Therefore, we can quickly evaluate it. After that, we compute $distA$ and $distB$ using $Dist()$ in Lines 5-6.  Pivot $p$ is properly set. Usually a zero vector can be used as a piviot. $Dist(x, Y)$ returns the results of $dist(x, y)$ for each $y\in Y$, where $x$ is a vector and $Y$ is a set of vectors. $Dist()$ is also processed using a GPU. Then we sort $distA$ and $distB$ (Lines 7-8) and rearrange $A$ and $B$ accordingly (Lines 9-10). In Line 11, we get $s_i$ and $e_i$ lists using the $compute\_range$ function which will be described below. 

\begin{figure}
\centering
\includegraphics[width=4.5in,page=7,trim=0 5in 0 0,clip]{figures}
\caption{Fast Completion Algorithm}
\label{fig:algo1}
\end{figure}

The process from Line 3 to Line 11 can be considered as a preprocessing step. We now perform the step of finding valid join pairs. In Lines 13-17, given $a_i$, we check if $dist(a_i, b_j)\le \epsilon$ for all $j \in \{1, 2, \cdots, n \}$. Because of Lemma \ref{lemma:lem2}, it would be all right to check if $dist(a_i, b_j)\le \epsilon$ for only $j \in \{s[i], s[i]+1, , \cdots, e[i]-1, e[i] \}$ (Lines 14 and 15). Note that $B'[start:end+1]$ is $B'[start],  B'[start+1], \cdots, B'[end]$ like the python operator. In the case of  $dist(a, b_j)\le \epsilon$, we generate the results (Line 17). However, the iteration is performed sequentially. In the next section, we will provide how to process this iteration in parallel.

\begin{figure}
\centering
\includegraphics[width=4.5in,page=8,trim=0 2.7in 0 0,clip]{figures}
\caption{Range Computation Algorithm}
\label{fig:algo2}
\end{figure}

The $compute\_range$ algorithm of Figure \ref{fig:algo2} is an algorithm to evaluate $s_i, e_i$ for $i \in [1,m]$ with the time complexity $O(max(m, n))$. The input of the algorithm is two sorted lists $distA'$ and $distB'$ which are named $sa$ and $sb$ in the algorithm. Based on Lemma \ref{lemma:lem2}, we can find $s_1, s_2, \cdots, s_m$ and $e_1, e_2, \cdots, e_n$ by scanning two sorted lists alternatively. The part from Line 1 to Line 14 is to evaluate $s_i$ while the part from Line 15 to Line 27 is to evaluate $e_i$. Two parts are performed in a similar way. First, we describe the part from Line 1 to Line 14.
To find the $s$ array, we execute two steps ($i$ pointer movement and $j$ pointer movement) alternately. We iterate the i pointer movement while $sb[j] \ge sa[i] - \epsilon$. In this step, we set $s[i]$ to the current j pointer (Line 6) because $sb[j] \ge sa[i] - \epsilon$ is satisfied for the first time with respect to $i$. See $c - \epsilon$ in Figure \ref{fig:proof} ($c=sa[i]$). Next, we iterate the j pointer movement by increasing j by 1 while $sb[j] < sa[i] - \epsilon$ (Lines 9-10). The i pointer movement and the j pointer movement are iterated alternately until either $i$ or $j$ pointer reaches the end (Lines 4-11). After breaking the loop, if $i < Ni$, we set $s[i]$ to the last index (Lines 12-14).

In a similar way, we can find $e_i$ (Lines 15-26). However, since $e_i$ is the last element among $\{ j | sb[j] \le c + \epsilon \}$ where $c=sa[i]$, we run the j pointer movement (Lines 17 and 18). We increase $j$ by 1 while $sb[j] \le c + \epsilon$. After that, we run the i pointer movement (Lines 20-22). In the i pointer movement, we set $e[i]$ to $j-1$ because in the previous step (j pointer movement), we can escape the loop when $sb[j] > sa[i] + \epsilon$. To avoid -1 in the index , we use $max(0, j-1)$. If there is no element between $c-\epsilon$ and $c + \epsilon$, the interval $[~s[i], e[i]~]$ should be rigorously the empty array. However, for programming convenience, in that case, the algorithm sets $s[i]$ and $e[i]$ to $0$ or sets $s[i]$ and $e[i]$ to $Nj-1$.

\subsection{Improving Parallelism} \label{sec:main6}

\begin{figure}
\centering
\includegraphics[width=4.5in,page=9,trim=0 4in 0 0,clip]{figures}
\caption{Fast Completion Algorithm with Grouping}
\label{fig:algo3}
\end{figure}

\begin{figure}
\centering
\includegraphics[width=5in,page=10,trim=0 8.3in 0 0,clip]{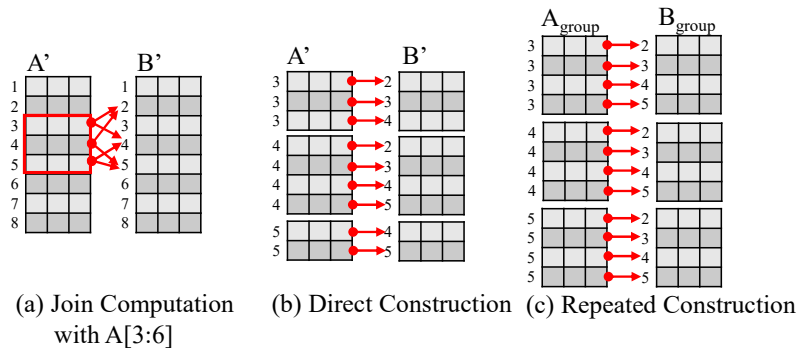}
\caption{Grouping Example}
\label{fig:grouping}
\end{figure}

Although we reduce the number of candidate pairs to be evaluated using $B_{reduced}$ in Line 14 of Figure \ref{fig:algo1}, we can improve the parallelism by grouping the process. Figure \ref{fig:algo3} shows the algorithm to more efficiently process multiple $Dist(a, B_{reduced})$ on a GPU. Since $B_{reduced}$ is a reduced form obtained by filtering out $B'$, the size of $B_{reduced}$ can be small. In that case, we can not have a lot of gain from a GPU. To avoid that, we can merge $B_{reduced}$ in multiple iterations for more simultaeneous processing. See Example-(a) of Figure \ref{fig:grouping}. We would like to compute $dist(a_i, B_{reduced})$ for $i \in \{3,4,5\}$. If s[3] = 2, e[3]=4, s[4] = 2, e[4]=5, s[5] = 4, e[5]=5, the algorithm of Figure \ref{fig:algo1} performs 3 group similarity evaluations (i.e., $dist(a_3, B[2:5])$, $dist(a_4, B[2:6])$, $dist(a_5, B[4:6])$) separately. For better parallel processing on the GPU, we can construct the data form like Figure \ref{fig:grouping}-(b) and evaluate them together. However, since building that kind of the data form can be burden on the GPU, we build the data form like Figure \ref{fig:grouping}-(c) by repeating the same vector set with the interval $range\_min$ and $range\_max$.
 
Now we explain the grouping algorithm of Figure \ref{fig:algo3}. The algorithm of Figure \ref{fig:algo3} corresponds to the part from Lines 12 to 17 in Figure \ref{fig:algo1} which is surrounded with the box.
For other parts. the algorithm with grouping uses the same code as the algorithm of Figure \ref{fig:algo1}. 
We first initialize $start$, $cnt$, $range\_min$ and $range\_max$ in Lines 12-15. Given the maximum group size (i.e., $MAX\_GROUP\_SIZE$), in Lines 16-20, we find the interval [start, i) such that the total size of $A_{group}$ fits $MAX\_GROUP\_SIZE$. The total size of $A_{group}$ is evaluated by $(max_{i \in \{start, start+1, \cdots, i-1 \} } e[i] - min_{i \in \{start, start+1, \cdots, i-1 \}  } s[i] +1) \times (i-start)$.
$cnt$, $range\_min$ and $range\_max$ correspond to $i-start$, $min_{i \in \{start, start+1, \cdots, i-1 \}  } s[i]$ and $max_{i \in \{start, start+1, \cdots, i-1 \} } e[i]$, respectively. Because $s_i \le s_{i+1}$ and $e_i \le e_{i+1}$ (Lemma \ref{lemma:lem2}), we can evaluate $range\_min$ and $range\_max$ more simply as shown in Lines 14 and 18.

After finding the interval $[start, i)$, we execute the transform() function to make the data form like Figure \ref{fig:grouping}-(c) (Line 21). In Line 21, the $transform()$ function makes the pair of $(A_{group}, B_{group})$ with  $A'[start:i]$ and $B'[range\_min:range\_max+1]$. Note that the notation of $X[i:j]$ follows the python syntax (i.e., X[i], X[i+1], $\cdots$, X[j-1]). After that, we compute Dist() and generate the results in Lines 22-24. Then, we initialize variables (Lines 25-28) and iterate the loop again. In Lines 29-32, the remaning part is evaluated in the same way above.

\begin{figure}
\centering
\includegraphics[width=4.5in,page=11,trim=0 8.3in 0in 0,clip]{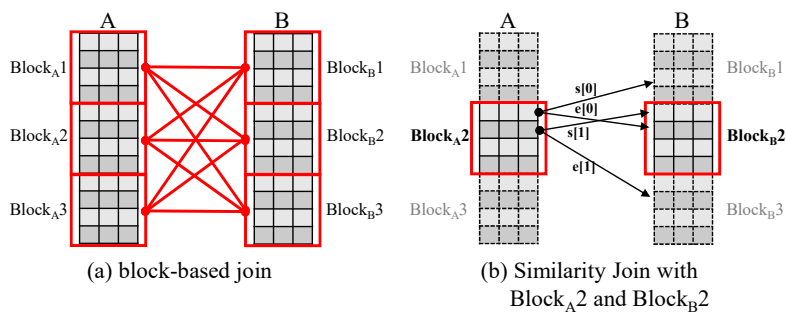}
\caption{Block Example}
\label{fig:blockex}
\end{figure}

\subsection{Partition-based Join} \label{sec:main7}
Until now, we assumed that $A$ and $B$ can be loaded into GPU memory. However, if $A$ and $B$ is large, it cannot be loaded into GPU memory. To solve it, we split data into the appropriate sized partitions and apply Algorithm \ref{fig:algo3} for all the block pairs as shown in Figure \ref{fig:blockex}-(a). In Figure \ref{fig:blockex}-(a), A and B have 3 blocks, respectively. We apply Algorithm \ref{fig:algo3} to 9 block pairs. However, $s[i]$ and $e[i]$ might be beyond the partition. 

For example, suppose that we apply Algorithm \ref{fig:algo3} to the $Block_A2$ and $Block_B2$ pair in Figure \ref{fig:blockex}-(b).  Consider the first and second elements in $Block_A2$. Because $s[0]$ is less than the start position of $Block_B2$, the index (s[0]) is invalid in $Block_B2$. In that case, we can set s[0] to the start position of $Block_B2$. Also, if e[1] in $Block_A2$ is greater than the end position of $Block_B2$, we can set e[1] to the end position of $Block_B2$. Using the concept, we can solve the invalid index problem in the partition-based join.

\section{Experiments} \label{sec:exp}
In this section, we will demonstrate our proposed framework can efficiently process the problem described in Section \ref{sec:problem}. The detailed experimental environment will be explained in Seciton \ref{sec:exp1} and the extensive experimental results will be shown in \ref{sec:exp2}.

\subsection{Experimental Environment} \label{sec:exp1}

To evaluate the performance of our framework in terms of the execution time, we used the commonly used data sets, \emph{WN18  \cite{kge:transe}, WN18RR  \cite{kge:conve}, FB15K \cite{kge:transe} and FB15K-237 \cite{fb15k237}}. WN18 and WN18RR data sets are extracted from WordNet \cite{wordnet} whereas FB15K and FB15K-237 data sets are extracted from the freebase knowledge graph \cite{freebase}. The number of nodes and the number of relation types in each data set are described in Figure \ref{fig:modelAcc}-(a). To evaluate our framework for those data sets, we downloaded four pretrained TransE models for WN18, WN18RR, FB15K, and FB15K-237 data sets at the LibKGE github website https://github.com/uma-pi1/kge. 

\begin{figure}
\centering
\includegraphics[width=3.5in,page=3,trim=0 5.2in 0 0,clip]{figures}
\caption{Trained Model Characteristics and Model Accuracy}
\label{fig:modelAcc}
\end{figure}

Figure \ref{fig:modelAcc}-(b) shows the properties of four pretrained TransE models. All the models downloaded from LibKGE allow reciprocal relations. A model with reciprocal relations considers inverse relations. Therefore, the number of trained relation embedding vectors are 2 times more than the number of original relations. The accuracies of the trained models such MRR (Mean Reciprocal Rank) and Hits@k in the filtered setting are shown in Figure \ref{fig:modelAcc}-(b). Given the test triple data $Test$ and the total triple data $Total$ (i.e., train data+valid data+test data), MRR and Hits@k in the filtered setting are evaluated as follows:

\begin{equation}
\begin{split}
MRR &= \frac{\sum_i \frac{1}{rank_i} + \sum_i \frac{1}{rank'_i} }{2 |Test|}  \\
Hits@k &= \sum_i \frac{hits(rank_i, k) + hits(rank'_i, k)}{2|Test|}
\end{split}
\end{equation}
 
where $(h_i, r_i, t_i)$ is the $i$-th triplet of $Test$,  $rank_i$ is the order of $(h_i, r_i, t_i)$ in the sorted score values $\{score(h_i, r_i, y) | y \in E \text{~and~} (h_i, r_i, y) \not\in Total\}$ and $rank'_i$ is the order of  $ (h_i, r_i, t_i)$ in the sorted score values $\{score(x, r_i, t_i) | x \in E \text{~and~} (x, r_i, t_i) \not\in Total\}$ and the $hits$ function  is defined below.

\begin{equation*}
hits(rank, k) = \begin{cases}
                        1~\text{if rank $\le$ k} \\
                        0~\text{otherwise}
                  \end{cases}
\end{equation*}

Although the LibKGE github site (https://github.com/uma-pi1/kge) provides pretained models and their accuracies (MRR, Hits@1, Hits@3 and Hits@10), we reevaluated the accuracies of each pretrained model using the command \emph{kge test} given by LibKGE. This is because in some cases, the accuracies written in the site were slightly different from our evaluation.

Even though the number of entities in the data sets is tens of thousands, the number of records to be evaluated is much bigger. For instance, we have to check  $14,951 \times (1,345 \times 2) \times 14,951 = 601,302,158,690$ vector operations for the case of FB15K.  Therefore, those data sets are sufficient to evaluate our framework. For scalability test, we made several sized data sets by extracting the original data. We extracted only entities by the numbers $0.2\times|E|$, $0.4\times|E|$, $0.6\times|E|$, $0.8\times|E|$, $1.0\times|E|$ and keep the relations unchanged. Also, to show the impact of $\epsilon$, we used the following $\epsilon$ values for each data set. We set the starting point according to the Top-1 Value in Figure \ref{fig:top1} and increased the value by 0.5.

\begin{itemize}
\item{WN18:} 0.33, 0.83, 1.33, 1.83, 2.33 
\item{WN18RR:} 1.02, 1.52, 2.02, 2.52, 3.02
\item{FB15K:} 2.25, 2.75, 3.25, 3.75, 4.25
\item{FB15K-237:} 0.01, 0.51, 1.01, 1.51, 2.01 
\end{itemize}

As a comparison system, we used the Quickjoin (CPU) approach \cite{sim:quickjoin} and the naive GPU Approach. Quickjoin \cite{sim:quickjoin} efficiently processes the similarity jon problem in a metric space using a similar concept of quicksort. Quickjoin chooses two pivots $p_1$ and $p_2$ and partitions data into two groups, $G_1$ and $G_2$ by comparing $r(=dist(p_1, p_2))$ and $dist(p_1, x)$. During the partitioning, Quickjoin additionally makes two boundary groups. The \emph{Quickjoin(eps, distFunc, objs)} algorithm in \cite{sim:quickjoin} is to process the self similarity join problem. To process the similarity join problem instead of the self similarity join, we used the \emph{QuickjoinWin(eps, distFunc, objs1, objs2)} subfunction in \cite{sim:quickjoin}. Also, when the number of elements is small, Quickjoin performs a straightforward approach called \emph{NestedLoop} which requires two $for$ loops. We implemented Quickjoin using the python programming language and used $numpy$ as a basic data structure. 
To improve the performance of \emph{NestedLoop}, 
we chose the $for$ loop for the data set which has more elements and removed the $for$ loop by performing operations on two dimensional data (i.e., set of vectors) instead of one vector.

The naive GPU approach processes the knowledge graph embedding completion problem using a GPU. The pseudo code of the naive GPU approach is shown in Figure \ref{fig:naivegpu}. We implemented the naive GPU approach and our framework using the PyTorch framework in order to process GPU related tasks. To manage the size of GPU memory very tightly, we can use torch.cuda.empty$\_$cache(). However, in our experiments, we did not use torch.cuda.empty$\_$cache() in both the GPU-naive approach and our framework for better performance. We ran all the codes on a Dell PowerEdge R740 Server with two NVIDIA A100 GPUs (OS: Ubuntu 20.04). However, we used a single GPU in our experiments.

\begin{figure}
\centering
\includegraphics[width=4in,page=12,trim=0 6.5in 0 0,clip]{figures}
\caption{Naive GPU Approach}
\label{fig:naivegpu}
\end{figure}

\subsection{Experimental Results} \label{sec:exp2}

\begin{figure}
\centering
\includegraphics[width=5in,page=13,trim=0 3in 0 0,clip]{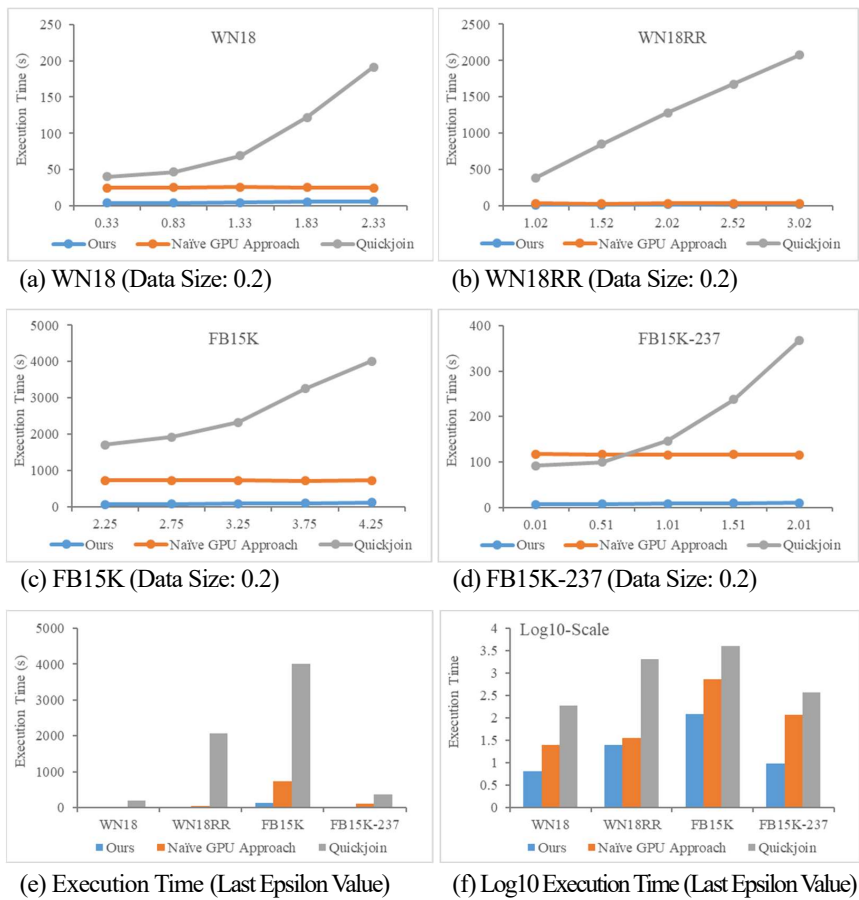}
\caption{Experimental Results for Our Framework, Naive GPU Approach and Quickjoin When Entity Size is 0.2 $|E|$}
\label{fig:exp1}
\end{figure}

We conducted extensive experiments with four trained models described above (i.e., {\footnotesize WN18, WN18RR, FB15K, FB15K-237}). We set the maximum group size (i.e., \emph{MAX$\_$GROUP$\_$SIZE}) in the algorithm of Figure \ref{fig:algo3} to 300,000. Figure \ref{fig:exp1} shows the processing time when the entity size is 0.2 $|E|$. We did experiments by changing the $\epsilon$ parameter. In Figures \ref{fig:exp1}-(a) to  \ref{fig:exp1}-(d), Quickjoin shows the worst performance in most cases while our framework shows the best performance in all the cases.
Although the Naive GPU approach is a straightforward method, it was much better than Quickjoin in terms of the execution time. This is because the similarity join problem can be easily broken down into multiple tasks and it is suitable to apply GPUs.

\begin{figure}
\centering
\includegraphics[width=5in,page=14,trim=0 6in 0 0,clip]{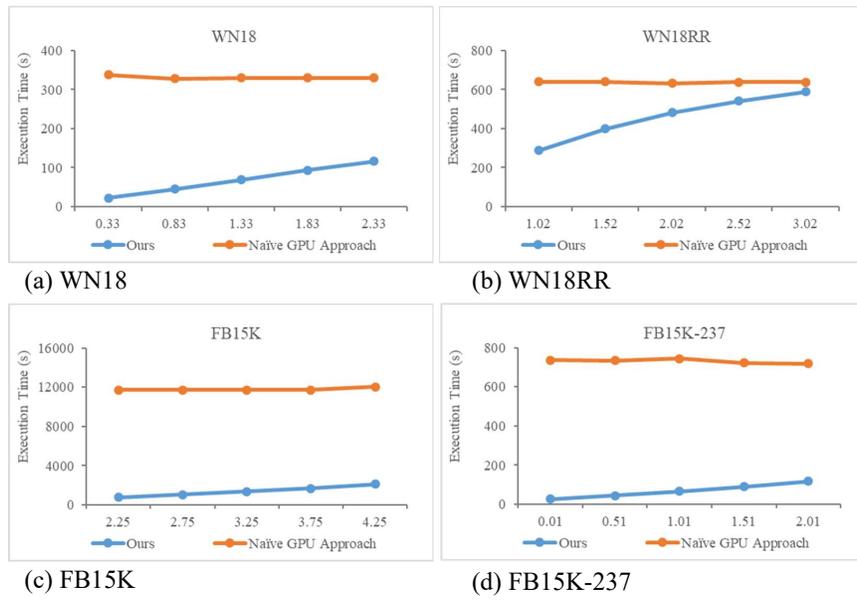}
\caption{Experimental Results according to $\epsilon$}
\label{fig:exp2}
\end{figure}

Figure \ref{fig:exp1}-(e) shows the execution times for four models when the $\epsilon$ parameter is set to the last $\epsilon$ value. Because Quickjoin is significantly slower than other approaches, we illustrate the processing time in log10 scale in Figure \ref{fig:exp1}-(f). As shown in Figures \ref{fig:exp1}-(e) and \ref{fig:exp1}-(f), Quickjoin has significantly worse performances compared to our framework and the naive GPU approach. On average, our framework is 7.4 times faster than the Naive GPU approach and 31.7 times faster than Quickjoin in the experiments of Figures \ref{fig:exp1}-(a) to (d). From the following experiments, we did not include the Quickjoion approach because of its slow performance. Also, we did not use a partition-based join in our framework because all of the entities can be sufficiently loaded into GPU memory. In the algorithm of Figure \ref{fig:algo1} with TransE embedding, $distB$, $distB'$ and $B'$ (Lines  6, 8 and 10) are the same regardless of $r$. Therefore, we evaluated Lines 6, 8 and 10 only once in the real implementation. 

Figure \ref{fig:exp2} shows experimental results according to $\epsilon$ when the number of entities is 1.0 $|E|$. In most cases, our framework is significantly better than the naive GPU approach. The performance of the naive GPU approach is not affected by the $\epsilon$ paratermeter because it has to check all the pairs regardless of the $\epsilon$ parameter. On average, our framework is 8.0 times faster than the naive GPU approach in Figure \ref{fig:exp2}. Also, in the case of FB15K-237 with $\epsilon=0.01$, our framework is 28.8 times better than the GPU naive approach.
 
Figure \ref{fig:exp3} shows experimental results according to the data size. As mentioned earlier, we used five sized data sets by controlling the number of entities, 0.2 $|E|$, 0.4 $|E|$, 0.6 $|E|$, 0.8 $|E|$, and 1.0 $|E|$. We did not change the size of relation types but used the same for five cases. The $\epsilon$ parameter was set to the middle point. Although the size of entities is not big, the number of records to be checked is big because we have to check $N \times R \times N $ records, where $N$ is the number of entities and $R$ is the number of relation types. As expected, the execution time increases as the number of entities gets higher in both our framework and the naive GPU approach. Our framework shows a much better performance compared to the naive GPU approach in most cases. On average, our framework is 6.8 times quicker than the naive GPU approach in Figure \ref{fig:exp3}.

To show the impact of the maximum group size (i.e., \emph{MAX$\_$GROUP$\_$SIZE}) in the algorithm of Figure \ref{fig:algo3}, we ran experiments by varying the maximum group size. Figure \ref{fig:expgroup} shows the experimental results according to the various group sizes. In general, as the group size increases, the performance gets better in terms of the execution time. However, if the maximum group size is too large, the performance might be worse as shown in FB15K and FB15K-237. We need to control the maximum group size properly according to the data set and the computing environment. 

\begin{figure}
\centering
\includegraphics[width=5in,page=15,trim=0 6in 0 0,clip]{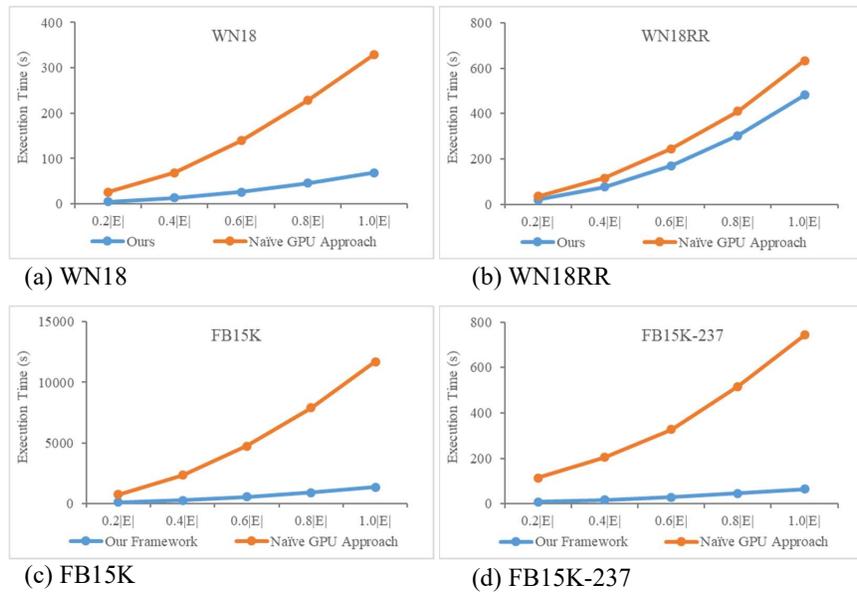}
\caption{Experimental Results according to Data Size}
\label{fig:exp3}
\end{figure}

\begin{figure}
\centering
\includegraphics[width=5in,page=16,trim=0 6in 0 0,clip]{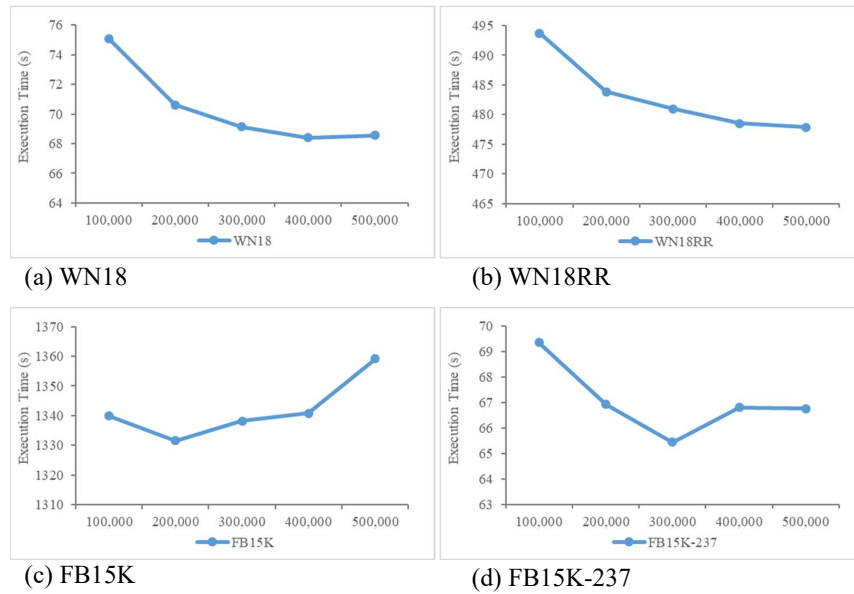}
\caption{Experimental Results according to Group Size}
\label{fig:expgroup}
\end{figure}

\section{Conclusion} \label{sec:con}
In this paper, we considered the knowledge graph embedding completion problem in terms of the running time. We developed a systematic framework to quickly process the knowledge graph embedding completion problem by focusing on the knowledge graph embedding models which are \emph{transformable to a metric space}. In the framework, we transformed the knowledge graph embedding completion problem into the similarity join problem. After that, based on formulas derived from the properties of a metric space, we devised an efficient algorithm to process the similarity join problem on GPUs. Finally, we showed that our framework was much better than the naive GPU approach and the CPU-based QuickJoin.

\section*{ACKNOWLEDGMENTS}
This work was partly supported by  Electronics and Telecommunications Research Institute(ETRI) grant funded by the Korean government (No.22ZS1100, Core Technology Research for Self-Improving Integrated Artificial Intelligence System) and Institute of Information \& communications Technology Planning \& Evaluation (IITP) grant funded by the Korea government(MSIT) (No.2022-0-00907, Development of AI Bots Collaboration Platform and Self-organizing AI).

\bibliographystyle{unsrt}  

\end{document}